\newtheorem{theorem}{Theorem}
\newcommand{\ftexttt}[1]{\texttt{\frenchspacing#1}}
\algnewcommand\algorithmicforeach{\textbf{for each}}
\title{MetaASSIST: Robust Dialogue State Tracking with Meta Learning}
\author{Fanghua Ye$^{\dagger}$ \space\space Xi Wang$^{\dagger}$ \space\space Jie Huang$^{\ddagger}$ \space\space Shenghui Li$^{\S}$ \space\space Samuel Stern$^{\P}$ \space\space Emine Yilmaz$^{\dagger}$ \\
        $^{\dagger}$University College London, UK \\ $^{\ddagger}$University of Illinois at Urbana-Champaign, USA \\ $^{\S}$Uppsala University, Sweden \\ $^{\P}$Affiniti AI, London, UK \\
        \texttt{\{fanghua.ye.19, xi-wang, emine.yilmaz\}@ucl.ac.uk} \\
         \texttt{jeffhj@illinois.edu, shenghui.li@it.uu.se, samuel.stern@affiniti.ai}}
\begin{document}
\maketitle
\begin{abstract}
Existing dialogue datasets contain lots of noise in their state annotations. Such noise can hurt model training and ultimately lead to poor generalization performance. A general framework named ASSIST has recently been proposed to train robust dialogue state tracking (DST) models. It introduces an auxiliary model to generate pseudo labels for the noisy training set. These pseudo labels are combined with vanilla labels by a \textit{common fixed} weighting parameter to train the primary DST model. Notwithstanding the improvements of ASSIST on DST, tuning the weighting parameter is challenging. Moreover, a single parameter shared by all slots and all instances may be suboptimal. To overcome these limitations, we propose a meta learning-based framework MetaASSIST to adaptively learn the weighting parameter. Specifically, we propose three schemes with varying degrees of flexibility, ranging from slot-wise to both slot-wise and instance-wise, to convert the weighting parameter into learnable functions. These functions are trained in a meta-learning manner by taking the validation set as meta data. Experimental results demonstrate that all three schemes can achieve competitive performance. Most impressively, we achieve a state-of-the-art joint goal accuracy of $80.10\%$ on MultiWOZ 2.4.
\end{abstract}




\section{Introduction}

Task-oriented dialogue systems have recently become a hot research topic. They act as digital personal assistants, helping users with various tasks such as hotel bookings, restaurant reservations, and weather checks. Dialogue state tracking (DST) is recognized as a core task of the dialogue manager. Its goal is to keep track of users' intentions at each turn of the dialogue \citep{mrksic-etal-2017-neural, rastogi2020towards}. Tracking the dialogue state accurately is of significant importance, as the state information will be fed into the dialogue policy learning module to determine the next system action to perform \citep{manotumruksa-etal-2021-improving-dialogue}. In general, the dialogue state is represented as a set of (\textit{slot}, \textit{value}) pairs \citep{henderson-etal-2014-second, budzianowski-etal-2018-multiwoz}. The slots for a particular task or domain are predefined (e.g., ``\textit{hotel-name}''). Their values are extracted from the dialogue context.

So far, a great variety of DST models have been proposed \citep{wu-etal-2019-transferable, campagna-etal-2020-zero, balaraman-etal-2021-recent, lee-etal-2021-dialogue, guo-etal-2022-beyond, shin-etal-2022-dialogue, wang2022luna}. These models assume that all state labels provided in the dataset are correct, without considering the effect of label noise. However, dialogue state annotations are error-prone, especially considering that most dialogue datasets (e.g., MultiWOZ \citealp{budzianowski-etal-2018-multiwoz}) are collected through crowdsourcing. The presence of label noise may impair model training and lead to poor generalization performance of the trained model, as deep neural models can easily overfit noisy training data \citep{zhang2021understanding}.

\begin{figure}[t]
  \centering
  \includegraphics[width=1.0\linewidth]{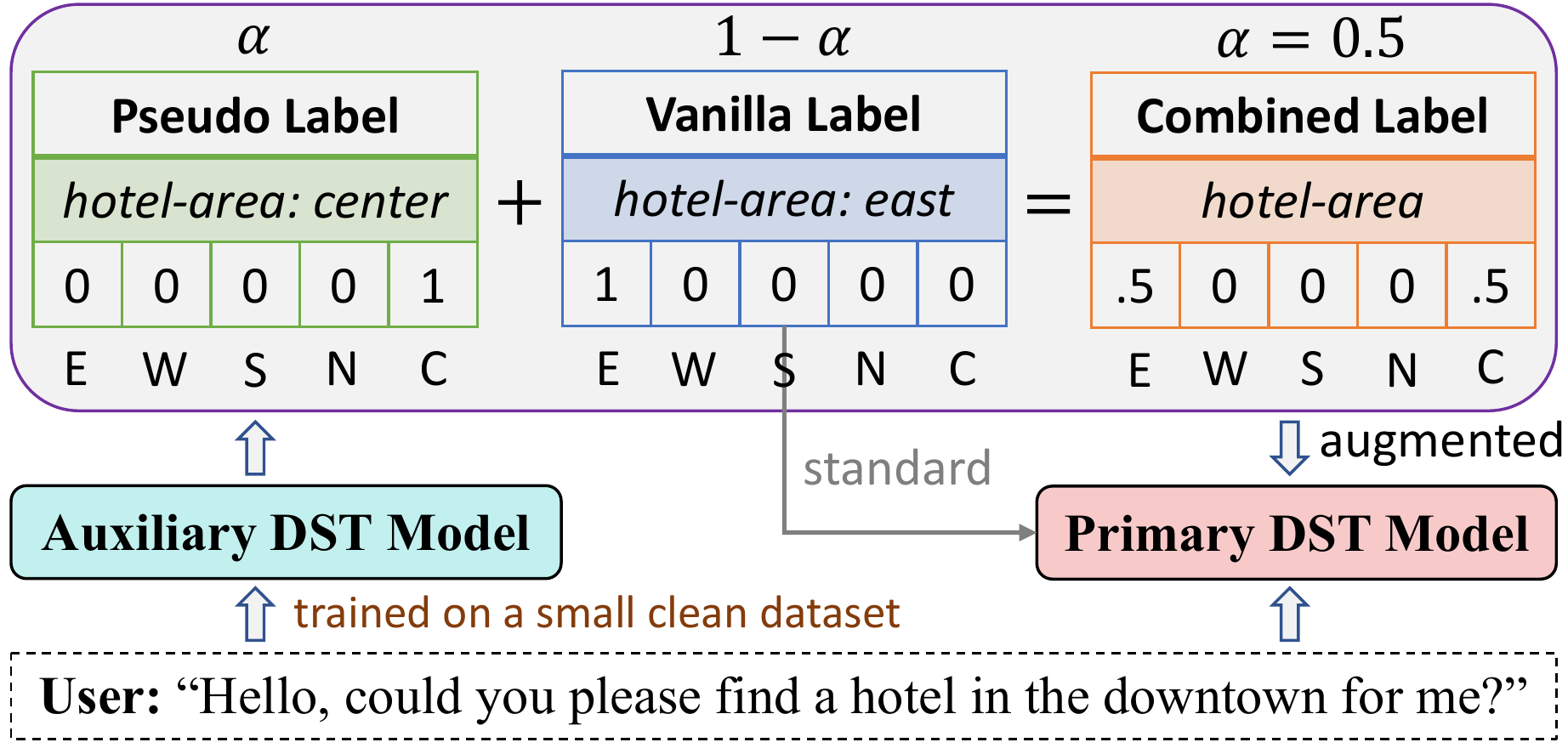}
  \caption{The structure of ASSIST and MetaASSIST. Both frameworks utilize soft labels obtained by linearly combining pseudo labels (one-hot) and vanilla labels (one-hot) using a weighting parameter $\alpha$ to enhance the training process compared to standard training that only relies on vanilla noisy labels. ASSIST adopts a single $\alpha$ shared by all slots and all training samples, while MetaASSIST uses slot-wise (and instance-wise) $\alpha$s.}
  \label{fig:framework}
\end{figure}


In order to robustly train DST models from noisy labels, \citet{ye-etal-2022-assist} proposed a general framework dubbed ASSIST, which augments the standard model training procedure with a small clean dataset. As shown in Figure~\ref{fig:framework}, ASSIST first trains an auxiliary model on the small clean dataset and applies this model to generate pseudo labels for each sample in the noisy training set. Then, it linearly combines the pseudo labels and vanilla labels to train the primary model. Both theoretically and empirically, ASSIST has been shown to be effective in reducing the impact of label noise.

However, ASSIST adopts a common weighting parameter to combine the pseudo labels and vanilla labels for all slots and all training samples, which is suboptimal. In reality, different slots tend to have different noise rates \citep{eric-etal-2020-multiwoz}, indicating that the weighting parameter should be slot-wise. On the other hand, different training samples may also require different weighting parameters, since whether pseudo labels or vanilla labels should be preferred is highly dependent on specific training instances. Furthermore, the weighting parameter is considered a hyperparameter and thus needs to be carefully tuned on each dataset.

To address the aforementioned limitations of ASSIST, we propose MetaASSIST, a meta learning-based general framework that supports automatically learning slot-wise (and instance-wise) weighting parameters. Specifically, our contributions are:
\begin{itemize}
    \item We propose three different schemes for transforming the weighting parameters into learnable functions. These schemes have varying degrees of flexibility, ranging from slot-wise to both slot-wise and instance-wise. 
    
    \item We propose to train these learnable functions through a meta-learning paradigm that takes the validation set as meta data and adaptively adjusts the parameters of each learnable function (as a result, the weighting parameters) by reducing the validation loss.
    
    \item We conduct extensive experiments to test the effectiveness of the proposed three schemes. All of them achieve superior performance. For the first time, we achieve over $80\%$ joint goal accuracy on MultiWOZ 2.4 \citep{ye2021multiwoz}.
\end{itemize}

\section{Preliminaries}

In task-oriented dialogue systems, the DST module transforms users' goals or intentions expressed in unstructured natural languages into structured state representations (e.g., a series of slot-value pairs). The state representations are continually updated in each round of the user-system interactions.


\subsection{Problem Statement}

More formally, we symbolize a dialogue of $T$ turns as $\mathcal{X} = \{(R_1, U_1), \dots, (R_T, U_T)\}$, where $R_t$ and $U_t$ denote the system response and user utterance at turn $t$ ($1 \leq t\leq T$), respectively. We adopt $\mathcal{X}_t$ to represent the  dialogue context from the first turn to the $t$-th turn, i.e., $\mathcal{X}_t = \{(R_1, U_1), \dots, (R_t, U_t)\}$. Further, let $\mathcal{S}$ denote the set of all the predefined slots and $\mathcal{B}_t = \{(s, v_t)|s \in \mathcal{S}\}$ the dialogue state at turn $t$. Here, $v_t$ is the corresponding value of slot $s$ at turn $t$. Then, the DST problem is defined as learning a dialogue state tracker $\mathcal{F}: \mathcal{X}_t \rightarrow \mathcal{B}_t$.

As discussed earlier, annotating dialogue states via crowdsourcing is prone to incorrect and inconsistent labels. These noisy annotations are likely to adversely affect model training. We denote the noisy state annotations as $\tilde{\mathcal{B}}_t = \{(s, \tilde{v}_t)|s \in \mathcal{S}\}$, where $\tilde{v}_t$ is the noisy label of slot $s$ at turn $t$. In this work, $\tilde{\mathcal{B}}_t$ refers to the labels provided in the dataset and $\mathcal{B}_t$ refers to the unknown true state annotations. As pointed out by \citet{ye-etal-2022-assist}, existing DST approaches are only able to learn a suboptimal dialogue state tracker $\tilde{\mathcal{F}} : \mathcal{X}_t \rightarrow \tilde{\mathcal{B}}_t$ rather than the optimal dialogue state tracker $\mathcal{F}: \mathcal{X}_t \rightarrow \mathcal{B}_t$. Aiming at learning a strong dialogue state tracker $\mathcal{F}^*$ to better approximate $\mathcal{F}$, \citet{ye-etal-2022-assist} proposed a general framework ASSIST that supports training DST models robustly from noisy labels.

\subsection{Overview of ASSIST}

ASSIST assumes that a small clean dataset is available. Based on this assumption, it firstly trains an auxiliary model on the clean dataset. Then, it leverages the trained model to generate pseudo labels for each sample in the large noisy training set. The generated pseudo labels are expected to be a good complement to the vanilla noisy labels. Therefore, combining the two types of labels has the potential to reduce the influence of noisy labels when training the primary model. 


Denote the generated pseudo state annotations as $\breve{\mathcal{B}}_t = \{(s, \breve{v}_t)|s \in \mathcal{S}\}$, where $\breve{v}_t$ represents the pseudo label of slot $s$ at turn $t$. Within the framework of ASSIST,  the primary model is required to predict  $\breve{\mathcal{B}}_t$ and $\tilde{\mathcal{B}}_t$ concurrently during the training process. In other words, the target of model training turns into learning a dialogue state tracker $\mathcal{F}^* : \mathcal{X}_t \rightarrow C(\breve{\mathcal{B}}_t, \tilde{\mathcal{B}}_t)$, where $C(\breve{\mathcal{B}}_t, \tilde{\mathcal{B}}_t)$ denotes a combination of $\breve{\mathcal{B}}_t$ and $\tilde{\mathcal{B}}_t$. There can be different methods to combine the generated pseudo labels and vanilla noisy labels. The most straightforward way is to combine them linearly, which is also the strategy adopted in ASSIST. The linearly combined label of slot $s$ at turn $t$ is formulated as:
\begin{equation} \label{eq:com}
    \bm{v}^c_t = \alpha \breve{\bm{v}}_t + (1 - \alpha) \tilde{\bm{v}}_t,
\end{equation}
where $\breve{\bm{v}}_t$ and $\tilde{\bm{v}}_t$ are the \textit{one-hot} vector representation of the pseudo label $\breve{v}_t$ and vanilla noisy label $\tilde{v}_t$, respectively. The parameter $\alpha$ $(0 \le \alpha \le 1)$ is employed to control the weights of $\breve{\bm{v}}_t$ and $\tilde{\bm{v}}_t$.

Let $p(\breve{v}_t | \mathcal{X}_t, s)$ denote the likelihood of $\breve{v}_t$ and $p(\tilde{v}_t | \mathcal{X}_t, s)$ the likelihood of $\tilde{v}_t$. Then, the likelihood of the combined label $\bm{v}^c_t$ is calculated as:
\begin{equation}
    p(\bm{v}^c_t|\mathcal{X}_t, s) = p(\breve{v}_t | \mathcal{X}_t, s)^{\alpha}  p(\tilde{v}_t | \mathcal{X}_t, s)^{(1 - \alpha)}.
\end{equation}
Based on this formula, the training objective of the primary model can be derived as follows:
\begin{equation} \label{eq:obj}
\begin{aligned}
    \mathcal{L} =& \frac{1}{|\mathcal{D}_n||\mathcal{S}|}\sum_{\mathcal{X}_t \in \mathcal{D}_n} \sum_{s \in \mathcal{S}} - \log p(\bm{v}^c_t|\mathcal{X}_t, s) \\
    =&  \frac{\alpha}{|\mathcal{D}_n||\mathcal{S}|} \sum_{\mathcal{X}_t \in \mathcal{D}_n} \sum_{s \in \mathcal{S}} - \log p(\breve{v}_t | \mathcal{X}_t, s) + \\
    & \frac{(1 - \alpha)}{|\mathcal{D}_n||\mathcal{S}|} \sum_{\mathcal{X}_t \in \mathcal{D}_n} \sum_{s \in \mathcal{S}} -\log p(\tilde{v}_t | \mathcal{X}_t, s),
\end{aligned}
\end{equation}
where $\mathcal{D}_n$ represents the noisy training set.


\section{MetaASSIST: A Meta Learning-Based Version of ASSIST}

Equations~\eqref{eq:com} and \eqref{eq:obj} show that a single $\alpha$ is shared by all slots when combining the pseudo labels and vanilla labels. This is suboptimal, as  the ratio of the noise rate of pseudo labels to that of vanilla labels tends to be different for different slots. When the vanilla labels have higher quality than the generated pseudo labels, $\alpha$ should be set to a small value; otherwise, a large $\alpha$ should be used. This implies that setting $\alpha$ to different values for different slots can help train the primary model more robustly. In the following, we first theoretically show that the combined labels obtained via slot-wise weighting parameters instead of a common one can better approximate the unknown true labels. Then, we elaborate on the proposed framework MetaASSIST.

\subsection{Theoretical Justification}

Following \citep{ye-etal-2022-assist}, we employ the mean squared loss to define the mean approximation error of any corrupted labels $\ddot{\bm{v}}_t$ to their corresponding unknown true labels $\bm{v}_t$, as formularized below:
\begin{equation} \label{eq:err}
     Y_{\ddot{\bm{v}}} = \frac{1}{|\mathcal{D}_n||\mathcal{S}|}\sum_{\mathcal{X}_t \in \mathcal{D}_n} \sum_{s \in \mathcal{S}} E_{\mathcal{D}_c} [ \Vert \ddot{\bm{v}}_t - \bm{v}_t \Vert^2_2].
\end{equation}
Here, $\mathcal{D}_c$ refers to the small clean dataset. Both $\ddot{\bm{v}}_t$ and $\bm{v}_t$ are the vector representations of labels.

Let $\alpha_s$ be the slot-wise weighting parameter for slot $s$. We utilize $\bm{v}^s_t$ to denote the combined label obtained by replacing $\alpha$ with $\alpha_s$ in Eq.~\eqref{eq:com}. Thus,
\begin{equation} \label{eq:slot-wise}
    \bm{v}^s_t = \alpha_s \breve{\bm{v}}_t + (1 - \alpha_s) \tilde{\bm{v}}_t.
\end{equation}
Same as $\alpha$, $\alpha_s$ is also bounded between 0 and 1.

Substituting the corrupted labels $\ddot{\bm{v}}_t$ in Eq.~\eqref{eq:err} with $\bm{v}^s_t$ and $\bm{v}^c_t$, we have the following theorem:
\begin{theorem} \label{thm}
 The optimal mean approximation error with respect to the combined labels $\bm{v}^s_t$ derived from slot-wise weighting parameters $\alpha_s$ is smaller than or equal to that of the combined labels $\bm{v}^c_t$ derived from a shared weighting parameter $\alpha$, i.e., 
 \begin{equation*}
     \min_{\alpha_s} Y_{\bm{v}^s} \leq \min_{\alpha} Y_{\bm{v}^c}.
 \end{equation*}
\end{theorem}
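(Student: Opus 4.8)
The plan is to exploit the fact that a single shared weighting parameter is merely a special case of slot-wise weighting parameters: setting $\alpha_s = \alpha$ for every $s \in \mathcal{S}$ recovers the combined labels $\bm{v}^c_t$ from $\bm{v}^s_t$. Hence minimizing $Y_{\bm{v}^s}$ amounts to minimizing over a strictly larger feasible set than minimizing $Y_{\bm{v}^c}$, and the minimum over the larger set cannot be bigger. No analytic estimate on the noise is needed; the statement is a feasible-set-nesting (equivalently, a separable-minimization) fact.

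First I would make the dependence of the error functional in \eqref{eq:err} on the weighting parameters explicit. Substituting $\bm{v}^s_t$ from \eqref{eq:slot-wise} for $\ddot{\bm{v}}_t$ and exchanging the order of the two sums, one can write $Y_{\bm{v}^s} = \sum_{s \in \mathcal{S}} g_s(\alpha_s)$, where for each slot $s$,
$g_s(\alpha_s) = \frac{1}{|\mathcal{D}_n||\mathcal{S}|} \sum_{\mathcal{X}_t \in \mathcal{D}_n} E_{\mathcal{D}_c}\big[\Vert \alpha_s \breve{\bm{v}}_t + (1-\alpha_s)\tilde{\bm{v}}_t - \bm{v}_t \Vert_2^2\big]$
depends only on its own parameter $\alpha_s \in [0,1]$ (the summand for slot $s$ involves $\breve{\bm{v}}_t, \tilde{\bm{v}}_t, \bm{v}_t$ only through that slot). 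Likewise $Y_{\bm{v}^c} = \sum_{s \in \mathcal{S}} g_s(\alpha)$ with the \emph{same} $\alpha$ in every term. Each $g_s$ is a finite average of convex quadratics in $\alpha_s$, hence continuous on the compact interval $[0,1]$, so all the minima appearing below are attained and writing $\min$ is legitimate; I would record this as a preliminary remark.

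Next I would derive the inequality, in either of two equivalent ways. Direct containment: let $\alpha^* \in \arg\min_{\alpha \in [0,1]} Y_{\bm{v}^c}$ and choose the slot-wise assignment $\alpha_s = \alpha^*$ for all $s$; this is feasible and gives $Y_{\bm{v}^s}\big|_{\alpha_s \equiv \alpha^*} = Y_{\bm{v}^c}\big|_{\alpha = \alpha^*} = \min_{\alpha} Y_{\bm{v}^c}$, so $\min_{\{\alpha_s\}} Y_{\bm{v}^s} \le \min_{\alpha} Y_{\bm{v}^c}$. Separable minimization: since the slots decouple, $\min_{\{\alpha_s\}_{s \in \mathcal{S}} \in [0,1]^{|\mathcal{S}|}} Y_{\bm{v}^s} = \sum_{s \in \mathcal{S}} \min_{\alpha_s \in [0,1]} g_s(\alpha_s) \le \sum_{s \in \mathcal{S}} g_s(\alpha)$ for every fixed $\alpha$, and taking the minimum over $\alpha$ on the right-hand side yields the claim. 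The second route has the bonus of exhibiting the gap explicitly as $\sum_{s \in \mathcal{S}} \big(g_s(\alpha^*) - \min_{\alpha_s} g_s(\alpha_s)\big) \ge 0$, so equality holds precisely when a single $\alpha$ simultaneously minimizes every $g_s$.

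There is essentially no hard step; the result is a convexity/nesting observation rather than a deep bound. The only thing requiring care is the bookkeeping: verifying that after the substitution the functional in \eqref{eq:err} genuinely separates across slots, and confirming attainment of the minima over $[0,1]$ and over $[0,1]^{|\mathcal{S}|}$. I would finish with the remark that the same argument applies verbatim to instance-wise and to joint slot-and-instance-wise parameters, since those choices only further enlarge the feasible set, which motivates the more flexible schemes introduced in the next section.
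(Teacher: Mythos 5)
Your proposal is correct and rests on the same idea as the paper's (one-line) proof: a shared $\alpha$ is just the special case $\alpha_s \equiv \alpha$, so minimizing over slot-wise parameters is a minimization over a larger feasible set and cannot give a larger value. Your added bookkeeping (separability across slots, attainment of the minima, and the explicit equality condition) is a fleshed-out version of the argument the paper states as obvious, not a different route.
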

\begin{proof}
The conclusion is obvious as we can replace $\alpha_s$ with $\alpha$ if $Y_{\bm{v}^c} < Y_{\bm{v}^s}$, but not vice versa. 
\end{proof}

\subsection{Slot-Wise Weighting Parameters as Meta Learnable Functions}

In the framework of ASSIST, $\alpha$ is treated as a hyperparameter. It needs to be meticulously tuned in the training phase so as to help the primary model achieve the best performance. Although it is feasible to tune a single parameter $\alpha$, it would become extremely painful to tune all the slot-wise parameters. This is because multi-domain dialogues can have dozens of or even hundreds of slots (e.g., there are 37 slots in the MultiWOZ dataset \citealp{eric-etal-2020-multiwoz}). To circumvent the troublesome step of tuning each slot-wise parameter $\alpha_s$ of slot $s$, we propose to learn all these parameters automatically via meta learning \citep{hospedales2021meta}. 

Specifically, we propose three different schemes to cast the slot-wise weighting parameters as learnable functions, which are described in detail below:
\begin{itemize} [label={}, leftmargin=0pt]
    \item \textbf{Scheme One (S1):} The first scheme assumes that the parameter $\alpha_s$ is fully independent of the dialogue context $\mathcal{X}_t$. As a consequence of this assumption, all the training samples will share the same $\alpha_s$ for slot $s$. Given that the parameter $\alpha_s$ is restricted to fall in the range of 0 to 1, it is tricky to learn it by gradient-based optimizers. In our implementation, we introduce an unconstrained learnable parameter $w_s$ and regard $\alpha_s$ as a Sigmoid function of $w_s$:
    \begin{equation} 
       \alpha_s = f_1(w_s) = \verb|Sigmoid|(w_s).
    \end{equation}
    As thus, the parameter $w_s$ rather than $\alpha_s$ will be directly optimized during the training process.
    
    
    \item \textbf{Scheme Two (S2):} Apart from being slot-wise, the second scheme assumes that the parameter $\alpha_s$ should also be relevant to the dialogue context $\mathcal{X}_t$ (i.e., instance-wise). This assumption is of practical significance, as whether the vanilla labels or the pseudo labels should be preferred may vary across the training samples. In order to make $\alpha_s$ instance-wise, we first construct a five-dimensional feature vector based on the loss values of both vanilla labels and pseudo labels, as shown below:
    \begin{equation} \label{eq:feature}
       \bm{h}_s = [\tilde{l}_s, \breve{l}_s, \tilde{l}_s - \breve{l}_s, \breve{l}_s - \tilde{l}_s, \tilde{l}_s + \breve{l}_s],
    \end{equation}
    where $\tilde{l}_s$ and $\breve{l}_s$ correspond to the loss value of the vanilla label $\tilde{v}_t$ and pseudo label $\breve{v}_t$ of slot $s$ associated with the dialogue context $\mathcal{X}_t$, respectively. To be more specific, $\tilde{l}_s$ and $\breve{l}_s$ are calculated as follows:
    \begin{align}
        \tilde{l}_s &= - \log p(\tilde{v}_t | \mathcal{X}_t, s), \\
        \breve{l}_s &= - \log p(\breve{v}_t | \mathcal{X}_t, s).
    \end{align}
    We then utilize an MLP network \citep{rumelhart1986learning} with a single hidden layer followed by the Sigmoid activation function to learn $\alpha_s$:
    \begin{equation} 
       \alpha_s = f_2(\bm{h}_s) = \verb|Sigmoid|(\verb|MLP|(\bm{h}_s)).
    \end{equation}
    
    \item \textbf{Scheme Three (S3):} The first and second schemes require that the weights of the pseudo label $\breve{v}_t$ and vanilla label $\tilde{v}_t$ of each slot in each training sample must add up to 1. In reality, however, both $\breve{v}_t$ and $\tilde{v}_t$ can be incorrect for some training samples, in which case, it is beneficial to assign small weights to both labels. In the third scheme, we remove the constraint on the sum and adopt two weighting parameters to combine the pseudo labels and vanilla labels. The combined label $\bm{v}^s_t$ is given by:
    \begin{equation} \label{eq:slot-wise2}
    \bm{v}^s_t = \breve{\alpha}_s \breve{\bm{v}}_t + \tilde{\alpha}_s \tilde{\bm{v}}_t.
    \end{equation}
    We learn $\breve{\alpha}_s$ and $\tilde{\alpha}_s$ ($0\leq \breve{\alpha}_s, \tilde{\alpha}_s \leq 1$) in the same way as how $\alpha_s$ is learned in the second scheme\footnote{With no constraint on the sum, $\breve{\alpha}_s$ and $\tilde{\alpha}_s$ are decoupled. One may argue that both parameters should be relevant to only the loss value of their corresponding label. In view of this, we also evaluated $\breve{\alpha}_s = f_3(\breve{l}_s)$ and $\tilde{\alpha}_s = f'_3(\tilde{l}_s)$. However, we found that the performance is much worse than using the loss value of the pseudo label and that of the vanilla label together.}:
    \begin{align}
        \breve{\alpha}_s &= f_3(\bm{h}_s) = \ftexttt{Sigmoid}(\ftexttt{MLP}(\bm{h}_s)), \\
        \tilde{\alpha}_s &= f'_3(\bm{h}_s) = \ftexttt{Sigmoid}(\ftexttt{MLP}(\bm{h}_s)).
    \end{align}
    It is noted that Eq.~\eqref{eq:slot-wise2} can be rewritten as:
    \begin{equation} \label{eq:slot-wise3}
    \bm{v}^s_t = (\breve{\alpha}_s + \tilde{\alpha}_s) \Big( \beta_s \breve{\bm{v}}_t + (1-\beta_s) \tilde{\bm{v}}_t\Big),
    \end{equation}
    where $\beta_s = \breve{\alpha}_s / (\breve{\alpha}_s + \tilde{\alpha}_s)$. Comparing Eq.~\eqref{eq:slot-wise3} to Eq.~\eqref{eq:slot-wise}, it can be seen that the main difference is that the combined label is further weighted by $\breve{\alpha}_s + \tilde{\alpha}_s$. This reweighting is expected to be able to discard the training samples whose pseudo labels and vanilla labels are both incorrect by adjusting $\breve{\alpha}_s + \tilde{\alpha}_s$ to be a small value. 
    
\end{itemize}


In schemes S2 and S3, the weighting parameters are both slot-wise and instance-wise. Compared to scheme S1 in which the weighting parameters are only slot-wise, adding the instance-wise flexibility can make the combined labels even more accurate in the optimal case. For example, when the pseudo label of slot $s$ in a training sample is correct while its vanilla label is wrong, the best $\alpha_s$ in scheme S2 will be 1.0, which leads to 0 approximation error.

\subsection{Learning Algorithm}

When training the primary model, besides its own parameters, the parameters of the learnable functions that are used to predict the weights also need to be optimized. Inspired by the common practice that the best model checkpoint is chosen according to the performance on the validation set, we decide to employ the validation set as meta data and then train the involved functions (i.e., $f_1$, $f_2$, $f_3$ and $f'_3$) in a meta-learning manner. 

For the sake of uniformly describing the learning processes of the three proposed schemes, we unify the combined label $\bm{v}^s_t$ as: 
\begin{equation}
    \bm{v}^s_t = f(\bm{w}_1) \breve{\bm{v}}_t + f'(\bm{w}_2) \tilde{\bm{v}}_t,
\end{equation}
where $\bm{w}_1$ and $\bm{w}_2$ are the parameters of the learnable functions. Note that for schemes S1 ans S2, $f'(\bm{w}_2) = 1 - f(\bm{w}_1)$\footnote{In this case, only $\bm{w}_1$ needs to be optimized. We keep $\bm{w}_2$ for ease of exposition, because it is required by scheme S3.}. Then, the training objective of the primary model is derived as:
\begin{equation}
\begin{aligned}
    &\mathcal{L}(\Theta) = \frac{1}{|\mathcal{D}_n||\mathcal{S}|}\sum_{\mathcal{X}_t \in \mathcal{D}_n} \sum_{s \in \mathcal{S}} - \log p(\bm{v}^s_t|\mathcal{X}_t, s) \\
    &= \frac{1}{|\mathcal{D}_n||\mathcal{S}|}\sum_{\mathcal{X}_t \in \mathcal{D}_n} \sum_{s \in \mathcal{S}} \big(f(\bm{w}_1)\breve{l}_s + f'(\bm{w}_2)\tilde{l}_s\big).
\end{aligned}
\end{equation}
Here, $\Theta$ represents the parameters of the primary model and is optimized by minimizing $\mathcal{L}(\Theta)$, i.e.,
\begin{equation} \label{eq:opttheta}
    \Theta^*(\bm{w}_1, \bm{w}_2) = \operatorname*{arg\,min}_\Theta \mathcal{L}(\Theta).
\end{equation}
The optimal parameters $\Theta^*(\bm{w}_1, \bm{w}_2)$ are expected to achieve the best performance on the validation set $\mathcal{D}_v$. Hence, we can optimize $\bm{w}_1$ and $\bm{w}_2$ in the following way:
\begin{equation} \label{eq:optw}
    \bm{w}^*_1, \bm{w}^*_2 = \operatorname*{arg\,min}_{\bm{w}_1, \bm{w}_2} \frac{\sum\limits_{\mathcal{X}_t \in \mathcal{D}_v} \sum\limits_{s \in \mathcal{S}} l^v_s(\Theta^*(\bm{w}_1, \bm{w}_2))}{|\mathcal{D}_v||\mathcal{S}|},
\end{equation}
where $l^v_s(\Theta^*(\bm{w}_1, \bm{w}_2)) = -\log p(v_t|\mathcal{X}_t, s)$ represents the loss of slot $s$ corresponding to the validation sample $\mathcal{X}_t$, calculated from the predictions of the primary model with parameters $\Theta^*(\bm{w}_1, \bm{w}_2)$.

\subsubsection*{Batch-Based Online Approximation}

As shown in Eqs.~\eqref{eq:opttheta} and \eqref{eq:optw}, two nested loops of optimization are required for calculating the optimal parameters ${\Theta}^*$, $\bm{w}^*_1$ and $\bm{w}^*_2$. Each single loop on the whole dataset can be fairly expensive. Following \citep{ren2018learning}, we adopt an online strategy to update $\Theta$, $\bm{w}_1$ and $\bm{w}_2$ alternately through a single optimization loop based on mini-batch data. Algorithm~\ref{alg:cap} summarizes the overall training procedure (including auxiliary model training).
\begin{algorithm}[h]
    \caption{Learning algorithm of MetaASSIST}\label{alg:cap}
    \small
    \begin{algorithmic}[1]
        \Require The small clean dataset $\mathcal{D}_c$, noisy training dataset $\mathcal{D}_n$, validation dataset $\mathcal{D}_v$, batch size $n$, $m$, $k$, and number of training steps for auxiliary and primary model $J_\mathcal{A}$, $J_\mathcal{P}$;
        \Ensure The parameters of the learnable functions $\bm{w}^{(J_\mathcal{P})}_1$, $\bm{w}^{(J_\mathcal{P})}_2$ and the parameters of the primary model $\Theta^{(J_\mathcal{P})}$;
        \State $\rhd$ \textbf{Auxiliary model training}
        \For{$j = 1, 2,\dots, J_\mathcal{A}$}
           \State $\mathcal{M}_c \leftarrow$ SampleMiniBatch($\mathcal{D}_c$, $n$);
           \State Update the auxiliary model on $\mathcal{M}_c$;
        \EndFor
        \State Apply the trained auxiliary model to generate pseudo state annotations $\breve{\mathcal{B}}_t$ for each $\mathcal{X}_t \in \mathcal{D}_n$;
      
        \State $\rhd$ \textbf{Primary model training}
        \State Initialize parameters $\Theta^{(0)}$, $\bm{w}^{(0)}_1$ and $\bm{w}^{(0)}_2$;
        \For{$j=1,2,\dots,J_\mathcal{P}$}
           \State $\mathcal{M}_n \leftarrow$ SampleMiniBatch($\mathcal{D}_n$, $m$);
           \State $\mathcal{M}_v \leftarrow$ SampleMiniBatch($\mathcal{D}_v$, $k$);
           \State \parbox[t]{\dimexpr\linewidth-\algorithmicindent}{$\hat{\Theta}^{(j)}(\bm{w}^{(j-1)}_1, \bm{w}^{(j-1)}_2) \leftarrow$ Update $\Theta^{(j-1)}$ on $\mathcal{M}_n$ using $\bm{v}^s_t = f(\bm{w}^{(j-1)}_1) \breve{\bm{v}}_t + f'(\bm{w}^{(j-1)}_2) \tilde{\bm{v}}_t$ as the label;}
           
           \State \parbox[t]{\dimexpr\linewidth-\algorithmicindent}{$\bm{w}^{(j)}_1, \bm{w}^{(j)}_2 \leftarrow$ Update $\bm{w}^{(j-1)}_1$ and $\bm{w}^{(j-1)}_2$ on $\mathcal{M}_v$ with loss values derived from $\hat{\Theta}^{(j)}(\bm{w}^{(j-1)}_1, \bm{w}^{(j-1)}_2)$;}
           
           \State \parbox[t]{\dimexpr\linewidth-\algorithmicindent}{$\Theta^{(j)} \leftarrow$ Update $\Theta^{(j-1)}$ on $\mathcal{M}_n$ again using the new label $\bm{v}^s_t = f(\bm{w}^{(j)}_1) \breve{\bm{v}}_t + f'(\bm{w}^{(j)}_2) \tilde{\bm{v}}_t$;}
        \EndFor
    \end{algorithmic}
\end{algorithm}

The procedure of training the primary model in MetaASSIST is similar to that of standard model training, except that three extra steps (lines 11-13) are added. This is because the optimal combined label $\bm{v}^s_t$ is unknown upon beginning. In Algorithm~\ref{alg:cap}, we choose to dynamically update $\bm{v}^s_t$ by adapting $\bm{w}_1$ and $\bm{w}_2$. At first, we use $\bm{w}^{(j-1)}_1$ and $\bm{w}^{(j-1)}_2$ to derive $\bm{v}^s_t$ and train the primary model on batch $\mathcal{M}_n$ for one step, which results in an interim model with parameters $\hat{\Theta}^{(j)}(\bm{w}^{(j-1)}_1, \bm{w}^{(j-1)}_2)$ (line 12). Then, we apply this interim model to the validation batch $\mathcal{M}_v$ and compute the validation loss. By lowering this loss (e.g., one-step optimization by SGD), we obtain the updated $\bm{w}^{(j)}_1$ and $\bm{w}^{(j)}_2$ (line 13). After that, we use $\bm{w}^{(j)}_1$ and $\bm{w}^{(j)}_2$ to update $\bm{v}^s_t$ and apply this new combined label to train the $(j-1)$-th step primary model on batch $\mathcal{M}_n$ again, which eventually leads to the updated primary model (line 14).


\section{Experimental Setup}

\subsection{Datasets}

We conduct experiments mainly on MultiWOZ 2.4 \citep{ye2021multiwoz}. It is the latest refined version of MultiWOZ 2.0 \citep{budzianowski-etal-2018-multiwoz}, a large-scale multi-domain task-oriented dialogue dataset consisting of over 10,000 dialogues spanning seven domains. The validation set and test set of MultiWOZ 2.4 have been carefully reannotated, while its training set remains the same as that of MultiWOZ 2.1 \citep{eric-etal-2020-multiwoz} and is therefore noisy. Following \citep{ye-etal-2022-assist}, we adopt the validation set as the small clean dataset. Thus, the validation set is used to train both the auxiliary model and the learnable functions. We also conduct experiments on MultiWOZ 2.0, whose validation set and test set have been replaced with the counterparts of MultiWOZ 2.4. Due to this change, we name the dataset MultiWOZ 2.0* in the following. The only difference between MultiWOZ 2.0* and MultiWOZ 2.4 is that the training set of the former is much noisier. 


\begin{table*}[t]
\centering
\setlength{\tabcolsep}{1.75mm}
\begin{tabular}{c|c|c|ccc|ccc}
\hline
\multirow{2}{*}{\textbf{\begin{tabular}[c]{@{}c@{}}Primary\\ Model\end{tabular}}} & \multirow{2}{*}{\textbf{Framework}} & \multirow{2}{*}{\textbf{Scheme}} & \multicolumn{3}{c|}{\textbf{Validation}} & \multicolumn{3}{c}{\textbf{Test}} \\ \cline{4-9} 
 &  &  & \textbf{JGA(\%)} & \textbf{JTA(\%)} & \textbf{SA(\%)} & \textbf{JGA(\%)} & \textbf{JTA(\%)} & \textbf{SA(\%)} \\ \hline \hline
\multirow{6}{*}{SOM-DST} & \multirow{3}{*}{ASSIST} & $\alpha=0.0$ & 68.77 & 87.85 & 98.45 & 66.78 & 87.81 & 98.38 \\
 &  & $\alpha=1.0$ & 77.35 & 91.05 & 98.98 & 68.69 & 88.41 & 98.55 \\
 &  & $\alpha=0.4$ & 78.59 & 91.74 & 99.02 & 75.19 & 91.02 & 98.84 \\ \cline{2-9} 
 & \multirow{3}{*}{MetaASSIST} & S1 & \textbf{80.95} & \textbf{92.64} & \textbf{99.16} & 75.12 & 90.88 & 98.87 \\
 &  & S2 & 78.87 & 92.01 & 99.07 & \textbf{76.74} & \textbf{91.65} & \textbf{98.95} \\
 &  & S3 & 80.02 & 92.05 & 99.12 & 75.20 & 91.07 & 98.90 \\ \hline \hline
\multirow{6}{*}{STAR} & \multirow{3}{*}{ASSIST} & $\alpha=0.0$ & 74.33 & 90.26 & 98.86 & 74.84 & 90.77 & 98.92 \\
 &  & $\alpha=1.0$ & 80.27 & 90.29 & 99.17 & 71.01 & 86.31 & 98.69 \\
 &  & $\alpha=0.4$ & 82.68 & 92.93 & 99.26 & 79.41 & 91.86 & 99.14 \\ \cline{2-9} 
 & \multirow{3}{*}{MetaASSIST} & S1 & \textbf{83.40} & 93.03 & \textbf{99.32} & 77.80 & 90.85 & 99.05 \\
 &  & S2 & 83.03 & 93.19 & 99.30 & \textbf{80.10} & \textbf{92.02} & \textbf{99.16} \\
 &  & S3 & 83.13 & \textbf{93.45} & 99.30 & 79.37 & 91.84 & 99.12 \\ \hline \hline
\multirow{6}{*}{AUX-DST} & \multirow{3}{*}{ASSIST} & $\alpha=0.0$ & 72.47 & 89.57 & 98.78 & 70.37 & 89.31 & 98.67 \\
 &  & $\alpha=1.0$ & 81.30 & 90.68 & 99.22 & 70.68 & 86.82 & 98.68 \\
 &  & $\alpha=0.4$ & \textbf{83.97} & \textbf{93.49} & \textbf{99.33} & 78.14 & 91.03 & 99.07 \\ \cline{2-9} 
 & \multirow{3}{*}{MetaASSIST} & S1 & 83.89 & 93.41 & \textbf{99.33} & 77.25 & 91.16 & 99.04 \\
 &  & S2 & 80.97 & 92.18 & 99.21 & 78.38 & 91.57 & 99.06 \\
 &  & S3 & 81.99 & 92.88 & 99.24 & \textbf{78.57} & \textbf{92.09} & \textbf{99.08} \\ \hline
\end{tabular}
\caption{Performance comparison on MultiWOZ 2.4. For ASSIST, $\alpha=0.0$ means that only the vanilla labels are used to train the primary model. $\alpha=1.0$ means that only the generated pseudo labels are used. $\alpha=0.4$ is the best common weighting parameter found in \citep{ye-etal-2022-assist}. All schemes in MetaASSIST use both types of labels.}
\label{tab:main}
\end{table*}



\subsection{Evaluation Metrics}

We adopt Joint Goal Accuracy (\textbf{JGA}), Joint Turn Accuracy (\textbf{JTA}) and Slot Accuracy (\textbf{SA}) as evaluation metrics. JGA is the primary metric for DST. It refers to the ratio of dialogue turns of which the entire state is correctly predicted. JTA is defined as the ratio of dialogue turns in which the values of all active slots are correctly predicted. A slot is said to be active if its value needs to be updated. SA considers only slot-level information and is calculated as the average of all individual slot accuracies.

\subsection{Auxiliary and Primary Models}

We use the same auxiliary and primary models as ASSIST to assess the effectiveness of MetaASSIST. Since the clean dataset is small, a simple auxiliary model \textbf{AUX-DST} was specially designed to avoid overfitting \citep{ye-etal-2022-assist}. AUX-DST leverages slot-token attention to extract slot-specific information and selects the value that best matches this information as prediction. It is also adopted as one primary model. The other primary models considered are: 1) \textbf{SOM-DST} \citep{kim-etal-2020-efficient}, an open vocabulary method that regards the dialogue state as a fixed-sized memory and selectively overwrites this memory with new values; and 2) \textbf{STAR} \citep{ye2021slot}, an ontology-based method that uses a stacked slot self-attention mechanism to learn the correlations amongst slots automatically.\footnote{Code is available at \url{https://github.com/smartyfh/DST-MetaASSIST}}

\section{Results and Discussion}

\subsection{Main Results}

Table~\ref{tab:main} shows the performance of the three primary models on MultiWOZ 2.4  trained using ASSIST and our proposed framework MetaASSIST. We observe that all three schemes in MetaASSIST substantially improve the performance of the primary models on the test set compared to training with only vanilla labels ($\alpha=0.0$) or only pseudo labels ($\alpha=1.0$). This observation indicates that the proposed schemes are effective in learning appropriate weighting parameters for combining pseudo labels and vanilla labels. Further, we observe that scheme S2 consistently outperforms ASSIST with the best common weighting parameter ($\alpha=0.4$), except the slot accuracy of AUX-DST. For example, STAR achieves $80.10\%$ joint goal accuracy when using scheme S2 to learn the weighting parameters. Table~\ref{tab:2.0res} presents the performance of SOM-DST trained on MultiWOZ 2.0*. It also shows that scheme S2 achieves better results.

\begin{table}[]
\centering
\setlength{\tabcolsep}{1.2mm}
\begin{tabular}{c|c|ccc}
\hline
\textbf{Frame} & \textbf{Scheme} & \textbf{JGA(\%)} & \textbf{JTA(\%)} & \textbf{SA(\%)} \\ \hline
\multirow{3}{*}{ASSIST} & $\alpha=0.0$ & 45.14 & 77.86 & 96.71 \\
 & $\alpha=1.0$ & 67.06 & 87.95 & 98.47 \\
 & $\alpha=0.6$ & 70.83 & 89.14 & 98.61 \\ \hline
\multirow{3}{*}{\begin{tabular}[c]{@{}c@{}}Meta\\ ASSIST\end{tabular}} & S1 & 70.18 & 88.69 & 98.60 \\
 & S2 & \textbf{71.46} & \textbf{89.35} & \textbf{98.65} \\
 & S3 & 70.48 & 88.84 & 98.60 \\ \hline
\end{tabular}
\caption{Performance comparison on MultiWOZ 2.0*'s test set by taking SOM-DST as the primary model. On this dataset, the best value of $\alpha$ for ASSIST is 0.6.}
\label{tab:2.0res}
\end{table}

\begin{figure*}[t!]
	\centering
	\subfigure[{Schemes S1 and S2}]{
		\includegraphics[width=0.65\columnwidth]{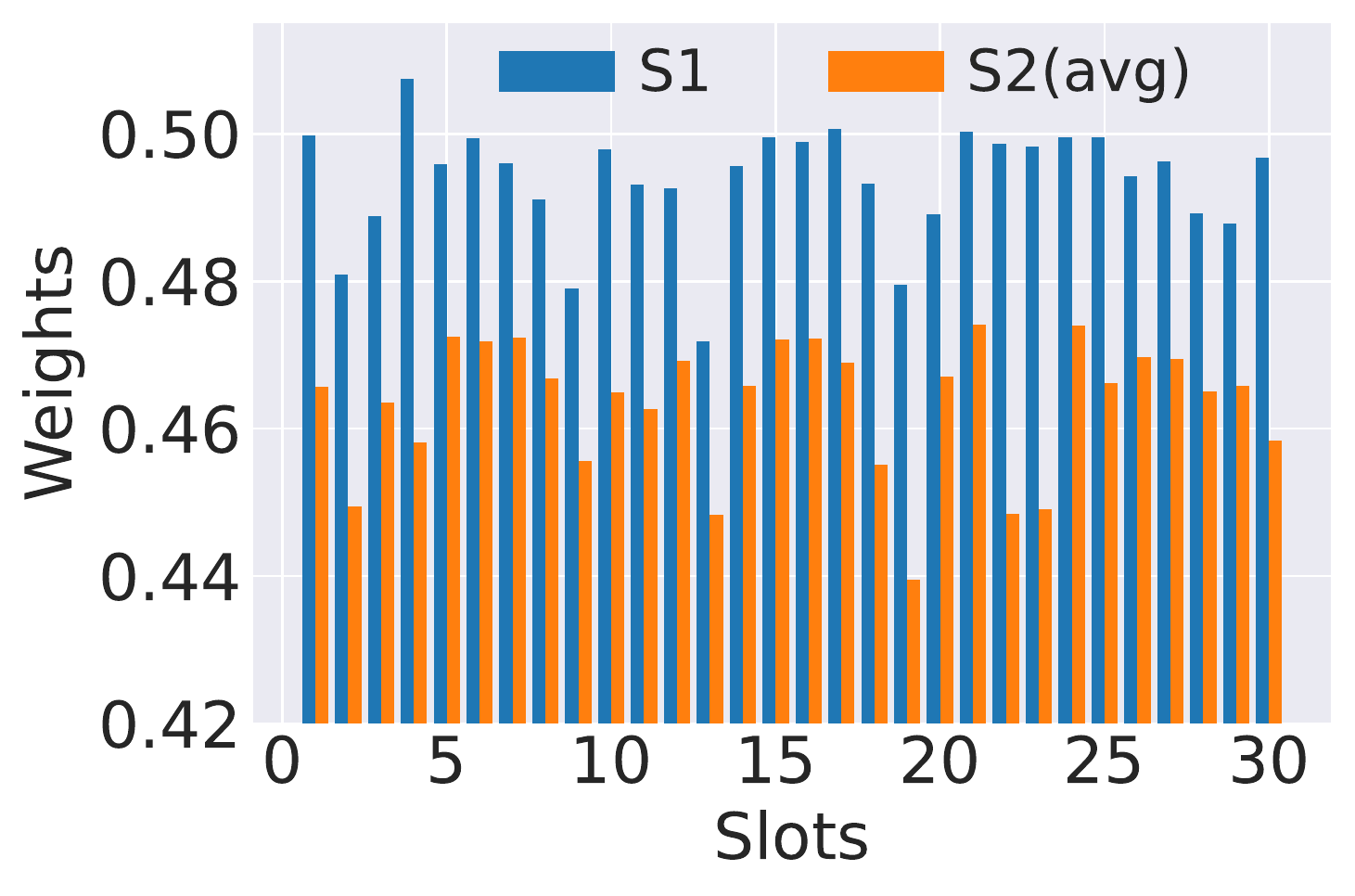}
	} \hfill
	\subfigure[{Scheme S2}]{
		\includegraphics[width=0.65\columnwidth]{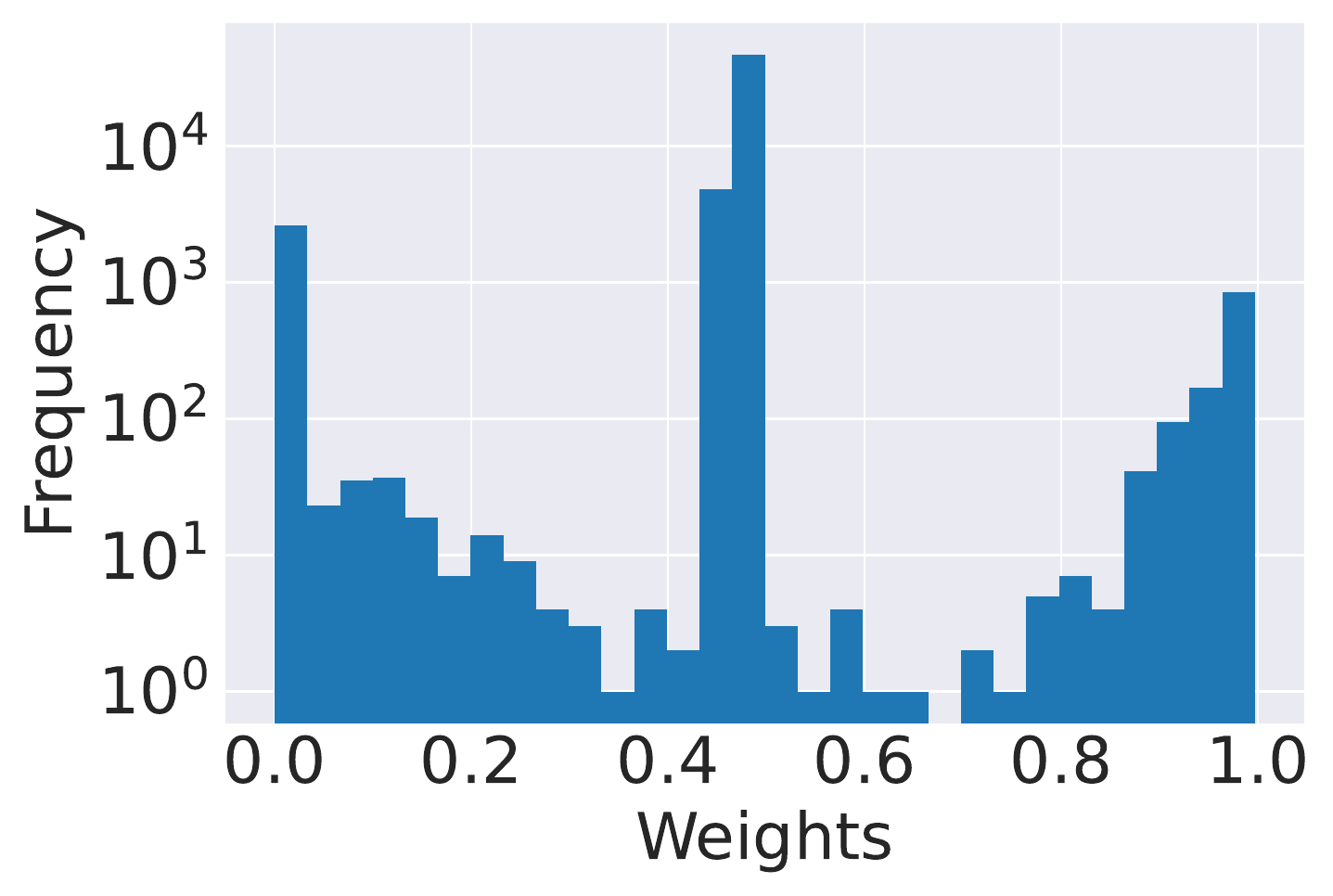}
	} \hfill
	\subfigure[{Scheme S3}]{
		\includegraphics[width=0.65\columnwidth]{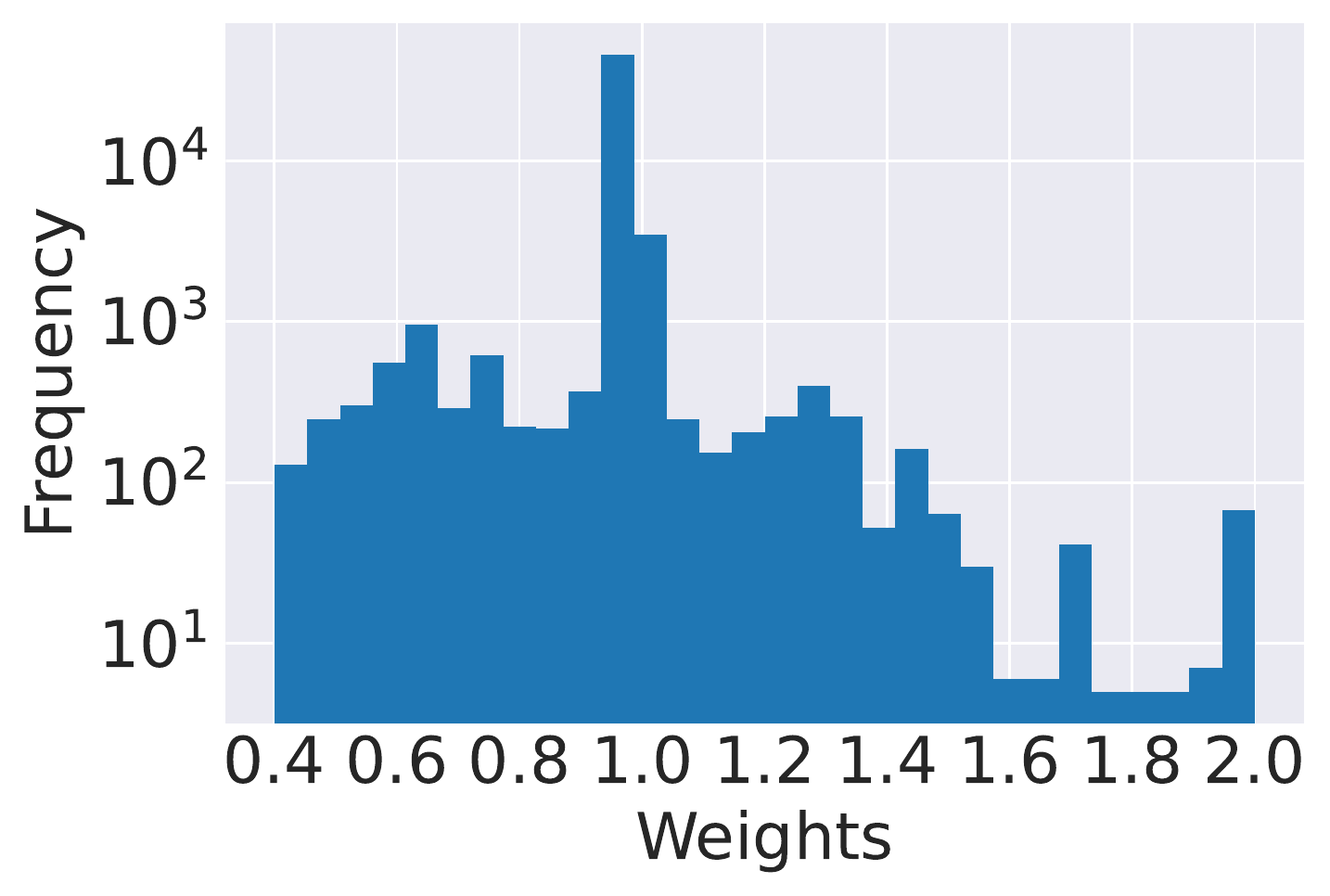}
	}
	\vspace*{-0.2cm}
	\caption{The distribution of learned weights in the three schemes. For scheme S2, we include the average weight of each slot in (a). For scheme S3, we illustrate the distribution of the sum of its two weighting parameters. }
	\label{fig:stardist}
\end{figure*}


On both MultiWOZ 2.4 and MultiWOZ 2.0*, we find that the performance of scheme S1 slightly lags behind ASSIST (with the best value of $\alpha$) in terms of joint goal accuracy, even though the weighting parameters learned in scheme S1 are slot-wise. The reason we speculate is that the learning algorithm fails to find the optimal slot-wise weighting parameters, but only the suboptimal ones. In \S\ref{sec:s1prior}, we show that scheme S1 can actually outperform ASSIST when the weighting parameters are initialized with the best value of $\alpha$ used in ASSIST.

As for scheme S3, Table~\ref{tab:main} shows that it achieves the best performance when AUX-DST is adopted as the primary model. For SOM-DST and STAR, its performance is comparable to the best results of ASSIST. Table~\ref{tab:main} also demonstrates that scheme S3 consistently outperforms scheme S1. However, it is inferior to scheme S2 when taking SOM-DST and STAR as the primary model. Recall that scheme S3 has the highest degree of flexibility in weighting parameters. These results suggest that while higher flexibility can in principle yield better results, the practical performance may not be particularly good due to the difficulty of learning optimal values for the weighting parameters.

\begin{table}[t]
\centering
\setlength{\tabcolsep}{1.5mm}
\begin{tabular}{l|cc|c}
\hline
\multirow{2}{*}{\textbf{Domain}} & \multicolumn{2}{c|}{\textbf{ASSIST}} & \textbf{MetaASSIST} \\ \cline{2-4} 
 & $\alpha=0.0$ & $\alpha=0.4$ & S2 \\ \hline
Attraction & 83.22 & 86.56 & \textbf{88.62}  \\
Hotel & 64.52 & 73.52 & \textbf{75.93} \\
Restaurant & 77.67 & 83.57 & \textbf{85.60} \\
Taxi & 54.76 & 63.65 & \textbf{67.71} \\
Train & 82.73 & \textbf{88.73} & 88.19 \\ \hline
\end{tabular}
\caption{Domain-specific JGA (\%) of SOM-DST on the test set of MultiWOZ 2.4.}
\label{tab:domain}
\end{table}

From Table~\ref{tab:main}, it can be further seen that scheme S1 consistently achieves higher validation performance than schemes S2 and S3 (except the joint turn accuracy of STAR). This might be confusing because scheme S1 underperforms schemes S2 and S3 on the test set. Moreover, the validation set is utilized to train the learnable functions in the three schemes. Hence, high validation performance is expected. However, we found that the distributions of the validation and test sets are not exactly the same (e.g., some slot values only appear in the test set). This implies that scheme S1 tends to overfit the validation data. While scheme S2 and scheme S3 suffer less from this issue, because the weighting parameters in them are not only related to state labels but also to the dialogue context.


\subsection{Domain-Specific Accuracy}

Apart from the overall performance comparison, we also investigate the performance improvements in each domain. For this purpose, we report the domain-specific joint goal accuracy of SOM-DST on MultiWOZ 2.4 in Table~\ref{tab:domain}\footnote{After preprocessing, there are five domains left. Please refer to Appendix~\ref{sec:appdata} for more details.}. As can be observed, MetaASSIST achieves the best performance in four domains. In particular, MetaASSIST outperforms ASSIST ($\alpha=0.4$) by $4.06$ absolute points in the taxi domain. It can also be observed that MetaASSIST consistently outperforms ASSIST across all domains when ASSIST only considers vanilla labels ($\alpha=0.0$).

\subsection{Distribution of Learned Weights}


Figure~\ref{fig:stardist} illustrates the distribution of the learned weights in each scheme. We conduct this study on MultiWOZ 2.4 and use STAR as the primary model. As shown in Figure~\ref{fig:stardist} (a), the learned weights in scheme S1 indeed vary across slots. For most slots, the weights are less than 0.5, indicating that their vanilla labels are of higher quality than pseudo labels. We also observe that the average weight of each slot in scheme S2 is smaller than the corresponding weight in scheme S1. In fact, the learned weights in scheme S2 are more consistent with the optimal value used in ASSIST (i.e., 0.4).

\begin{figure*}[!t]
   \begin{minipage}[t]{0.28\textwidth}
     \centering
     \includegraphics[width=0.88\linewidth]{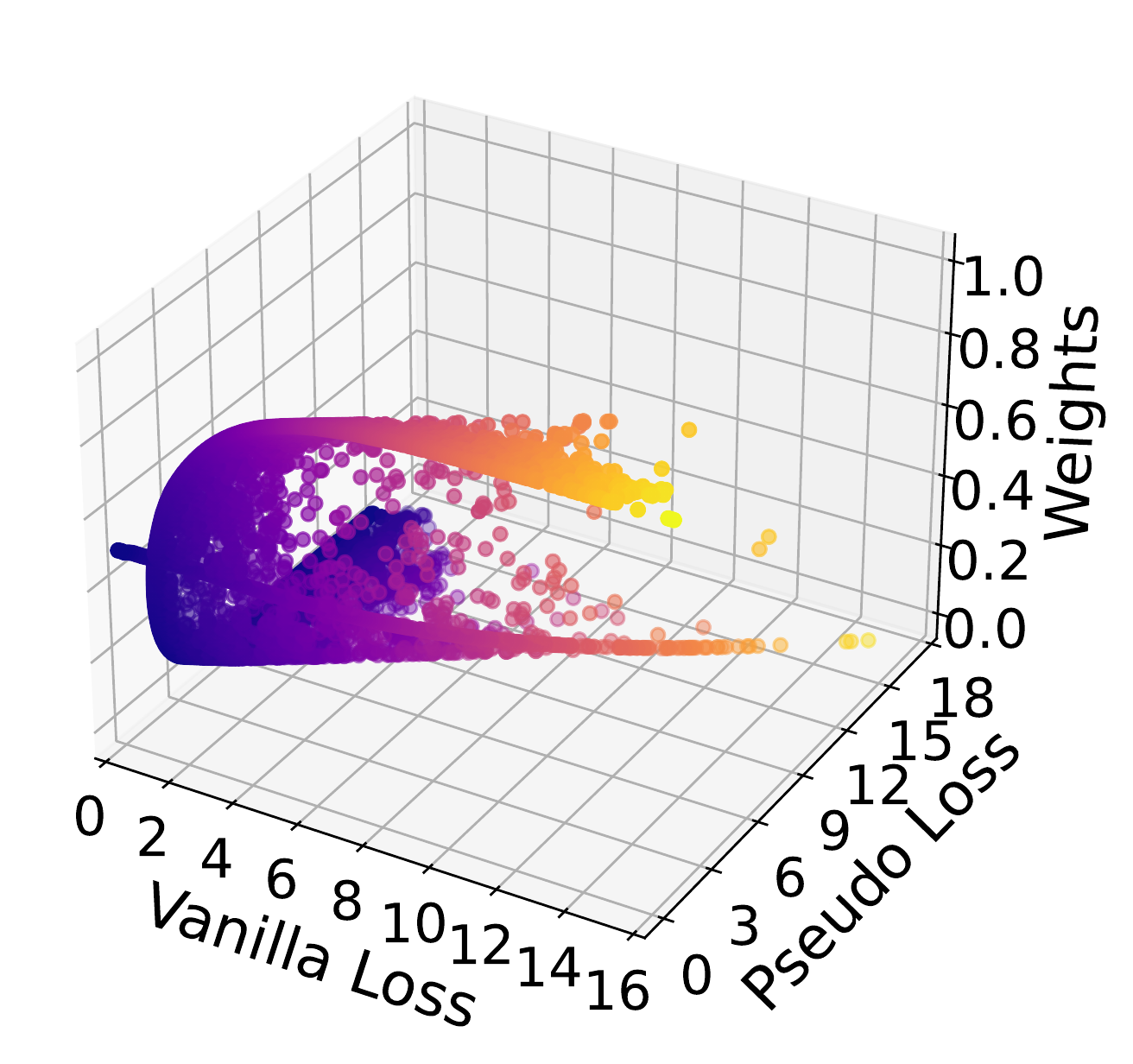}
     \caption{The distribution of weights relative to loss values.}\label{fig:alpha}
   \end{minipage}\hfill
   \begin{minipage}[t]{0.32\textwidth}
     \centering
     \includegraphics[width=1.0\linewidth]{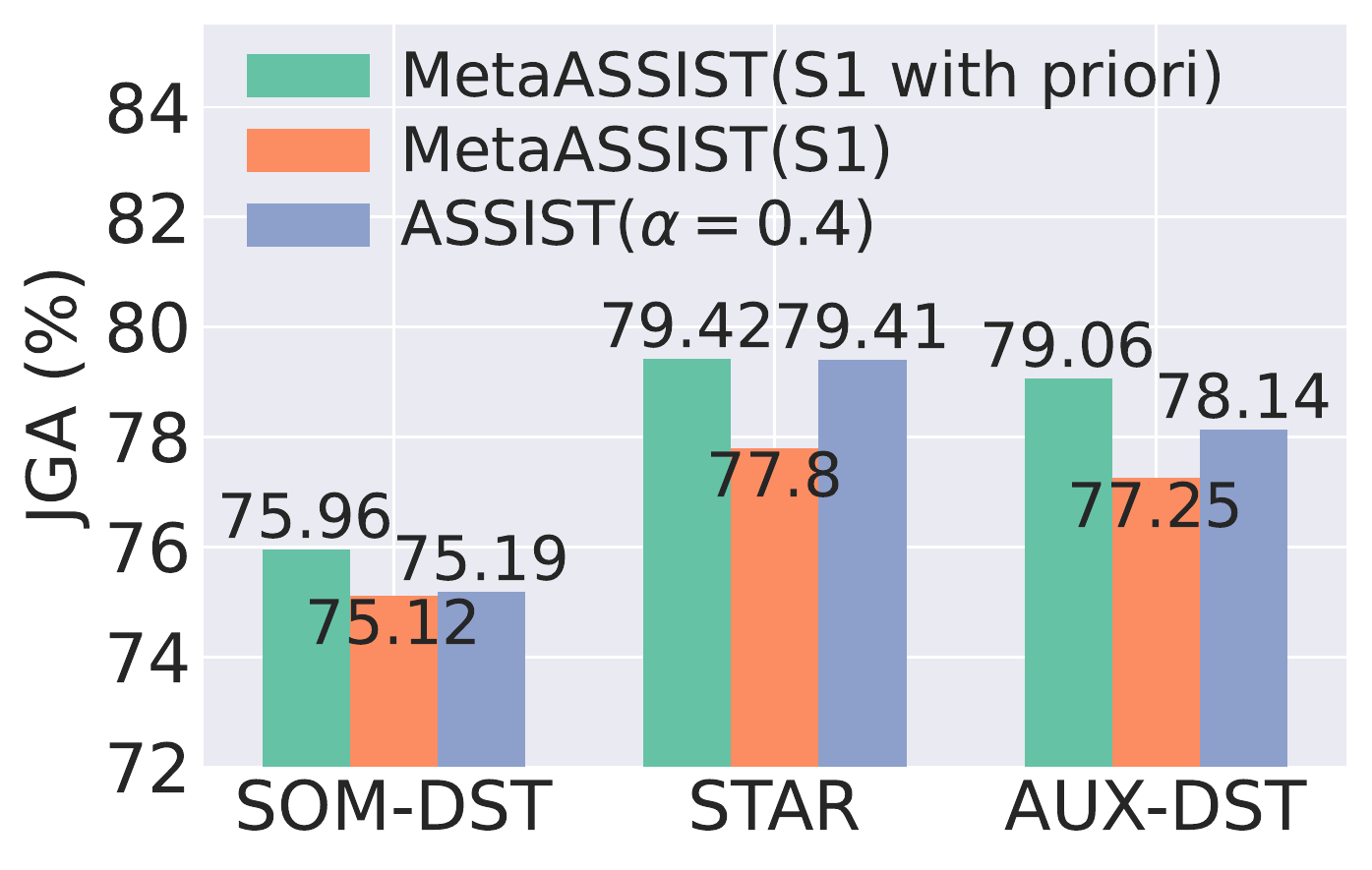}
     \caption{Performance of scheme S1 with prior knowledge.}\label{fig:sizeofcleanset}
   \end{minipage}\hfill
   \begin{minipage}[t]{0.32\textwidth}
     \centering
     \includegraphics[width=0.928\linewidth]{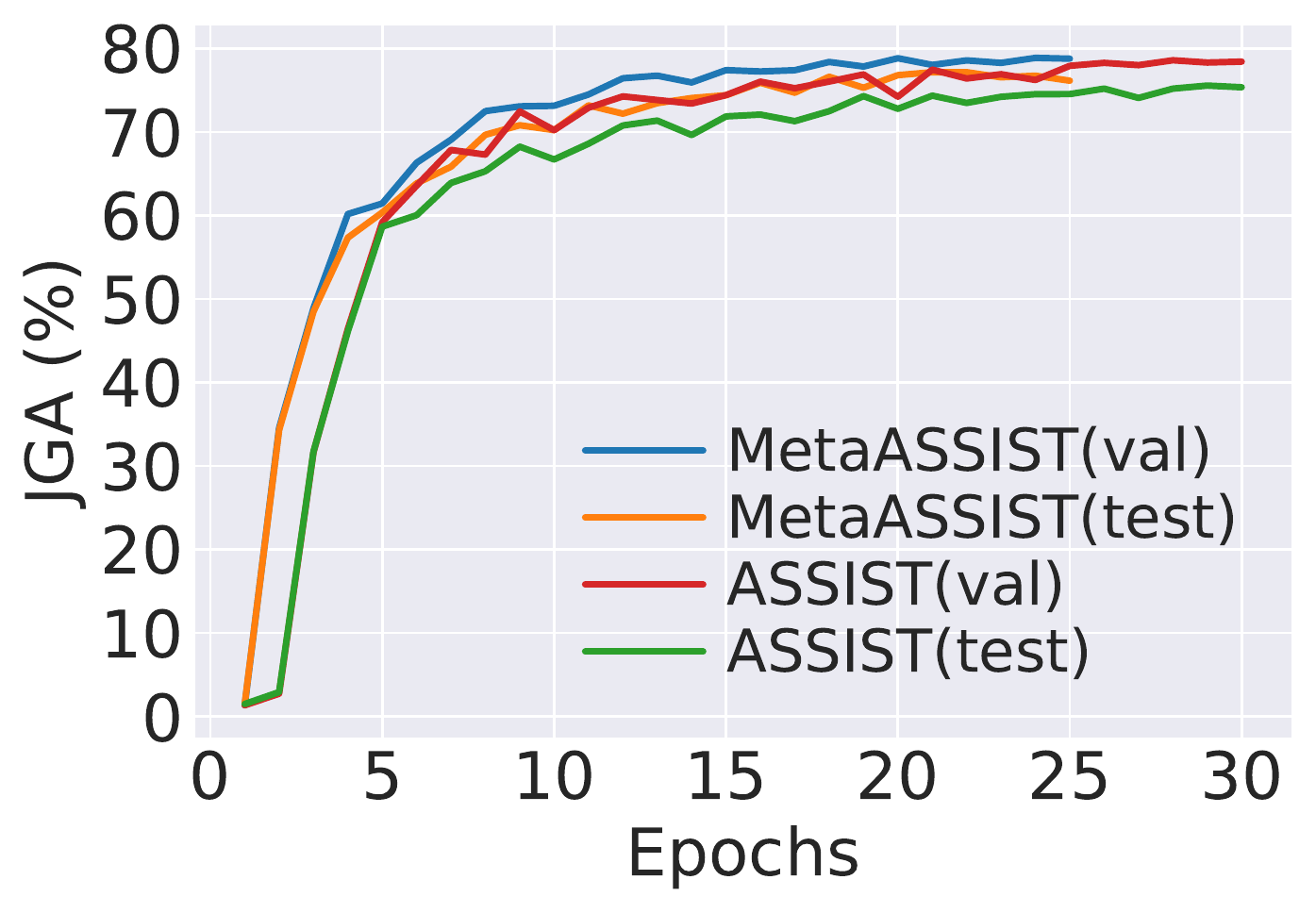}
     \caption{The validation and test accuracy changing curves.}\label{fig:quan}
   \end{minipage}
\end{figure*}

Since schemes S2 and S3 are instance-wise, we randomly select a slot and plot the distribution of weights of this slot over all training samples. The results are shown in Figures~\ref{fig:stardist} (b) and (c). As can be seen, the learned weights vary across training samples. In scheme S2, although the learned weights for most training samples fall between 0.4 and 0.5, there are also many samples whose weights can be as small as 0 or as large as 1. Note that scheme S3 has two weighting parameters. We plot the distribution of their sums. It is interesting to observe that the sums of most training samples are around 1, even though we have removed the summation constraint in scheme S3. Nonetheless, we also observe that the sums of many training samples are less than 1, meaning that small weights have been assigned to both pseudo labels and vanilla labels.

Figure~\ref{fig:alpha} illustrates the distribution of weights in scheme S2 relative to loss values of pseudo labels and vanilla labels. We see that when both vanilla loss and pseudo loss are very small, the weights are around 0.5. When the vanilla loss is much smaller than the pseudo loss, the weights tend to be small. And when the pseudo loss is much smaller than the vanilla loss, the weights tend to be large.

The observations above confirm the strong capability of MetaASSIST in learning proper slot-wise (and instance-wise) weights based on loss values.


\subsection{Scheme S1 with Prior Knowledge}
\label{sec:s1prior}

Given that the weighting parameters in scheme S1 are only slot-wise, we can readily initialize these parameters with any specified value. This implies that we can integrate prior knowledge into scheme S1. Specifically, we study using the optimal value of $\alpha$ found in ASSIST to initialize its weighting parameters. The results on MultiWOZ 2.4 are shown in Figure~\ref{fig:sizeofcleanset}. We observe that the prior knowledge can effectively improve the performance of scheme S1 for all three primary models. Furthermore, the results demonstrate that scheme S1 can outperform ASSIST when they use the same prior knowledge.

\subsection{Performance over Training Epochs}


Figure~\ref{fig:quan} depicts the changing curves of validation accuracy and test accuracy over training epochs. We utilize SOM-DST as the primary model and conduct this experiment on MultiWOZ 2.4. For MetaASSIST, scheme S2 is applied. It is shown that the validation and test accuracy using MetaASSIST improves much faster than using ASSIST during early training epochs. In subsequent epochs, the validation accuracy with MetaASSIST is also higher and changes more smoothly. 






\section{Related Work}

DST has been studied for more than one decade. Traditional DST models rely on a separate language understanding module to extract relevant information \citep{wang-lemon-2013-simple, williams-2014-web}. In recent years, designing DST models based on neural networks, especially pretrained language models, has become the mainstream and a large number of neural DST models have been proposed~\citep{mrksic-etal-2017-neural, wu-etal-2019-transferable, hosseini2020simple, zhu-etal-2020-efficient, lin-etal-2021-leveraging, lee-etal-2021-dialogue, zhao-etal-2021-effective-sequence,  feng-etal-2022-dynamic, hu2022context, guo-etal-2022-beyond, heck2022robust, manotumruksa2022similarity, sun2022tracking}.

Although these neural DST models have demonstrated good performance, they fail to consider the effect of label noise. It has been shown that no matter how noisy the training data are, neural models can easily overfit the training data \citep{zhang2021understanding}. As a result, the generalization performance of models trained on noisy data is usually unsatisfactory. Recently, \citet{ye-etal-2022-assist} proposed a general framework, ASSIST, to robustly train DST models on noisy data. Their experimental results show that several existing DST models can achieve much higher performance when trained under this framework. However, as discussed earlier, ASSIST contains a parameter that needs to be tuned on each dataset. Besides, the parameter is shared among all slots and all training samples, which we have shown to be suboptimal. Our proposed framework leverages meta learning to automatically learn slot-wise (and instance-wise) parameters, overcoming the limitations of ASSIST.

The essence of meta learning is learning to learn \citep{hospedales2021meta, zhu2022meta}, which makes it a  natural fit for our task of automatically learning parameters. We found that several existing works \cite{huang-etal-2020-meta, zeng2021domain, dingliwal-etal-2021-shot} have already applied meta learning to DST. These works directly adopt the MAML \citep{finn2017model} algorithm or its variants and focus on improving the few-shot learning ability of DST models. While our focus is to improve the robustness of DST models.

\section{Conclusion}

In this work, we proposed a meta learning-based general framework MetaASSIST to robustly train DST models on noisy data. MetaASSIST improves ASSIST by automatically learning slot-wise (and instance-wise) weighting parameters that are used to combine pseudo labels and vanilla labels. Our comprehensive experiments demonstrate the effectiveness of MetaASSIST. For future work, we plan to extend the current framework to utilize pseudo labels generated by multiple auxiliary models. 


\section*{Limitations}

Our proposed framework MetaASSIST learns to weight pseudo labels and vanilla labels by minimizing the validation loss. Although it reduces the impact of label noise on model training, it runs the risk of biasing the trained model towards overfitting the validation data. One may argue that selecting the best model checkpoint based on validation performance is a standard strategy in machine learning. However, our empirical study shows that high performance on the validation set does not necessarily lead to high performance on the test set. This is because the validation set and test set are usually small, and their empirical data distributions can differ a lot. For a model trained with our framework to have high generalization performance, the validation set should be unbiased, but this requirement seems to be very demanding. In practice, we can augment the validation set to alleviate this problem.

Another limitation is that our proposed learning algorithm is more time-consuming than regular model training. As described in Algorithm~\ref{alg:cap}, for each training batch, the model needs to perform two forward and backward passes (the first pass to obtain the interim model, the second pass to obtain the updated model). For each validation (meta) batch, the model also needs to perform one forward and backwad pass. Therefore, the learning algorithm needs $3\times$ training time compared to regular training. Nonetheless, compared to ASSIST, the proposed framework MetaASSIST is more time-efficient. For ASSIST, we need to try a large number of values for $\alpha$ to find the best one.


\section*{Ethics Statement}

The DST module is an essential component in many industrial and commercial dialogue systems. Performance improvements on DST can help these systems better understand users' requirements, thereby improving user satisfaction. Our proposed framework could be applied to these systems and improve their DST performance. The proposed framework can also be applied to other NLP and machine learning applications.

\section*{Acknowledgements}

This work was funded by the Alan Turing Institute under the EPSRC grant EP/N510129/1 and the EPSRC Fellowship titled “Task Based Information Retrieval” and grant reference number EP/P024289/1.


\bibliography{anthology,custom}
\bibliographystyle{acl_natbib}

\clearpage 
\appendix

\section{Implementation Details}
\label{sec:appdata}

The MultiWOZ dataset contains seven domains: \textit{attraction}, \textit{hotel}, \textit{restaurant}, \textit{taxi}, \textit{train}, \textit{hospital} and \textit{police}. However, the \textit{hospital} domain and \textit{police} domain only occur in the training set. Following previous works \citep{wu-etal-2019-transferable, kim-etal-2020-efficient}, we remove the two domains. This results in five domains with 30 slots in total.

For a fair comparison with ASSIST, we directly employ the pseudo labels published by the authors instead of training a new auxiliary model ourselves to generate pseudo labels. For all primary models, we modify their released code to implement our learning algorithm. All the primary models adopt BERT \citep{devlin-etal-2019-bert} as the dialogue context encoder and are initialized using the pretrained BERT-base-uncased model. As for the MLP network in schemes S2 and S3, we set the hidden layer dimension to 768. The output layer dimension is fixed at 1. The MLP network is randomly initialized. For scheme S1, we initialize the weighting parameters to be 0.5. For all primary models, we adopt their default hyperparameter settings, except the training epochs. For SOM-DST, we halve the batch size due to its high GPU memory requirement. We fix the validation (meta) batch size at 8 for all three primary models. AdamW \citep{loshchilov2017decoupled} is employed as the optimizer and a linear scheduler with warmup is created to adjust the learning rate dynamically. The warmup proportion is fixed at 0.1. Tables~\ref{tab:para} and \ref{tab:para2} summarize the training epochs and peak validation (meta) learning rate in each scheme for each model.

\begin{table}[t]
\centering
\setlength{\tabcolsep}{1.2mm}
\begin{tabular}{c|c|cc}
\hline
\textbf{Model} & \textbf{Scheme} & \textbf{Epochs} & \textbf{Learning Rate} \\ \hline
\multirow{3}{*}{SOM-DST} & S1 & 30 & 4e-5 \\
 & S2 & 25 & 2e-5 \\
 & S3 & 25 & 1e-5 \\ \hline
\multirow{3}{*}{STAR} & S1 & 15 & 5e-5 \\
 & S2 & 15 & 1e-5 \\
 & S3 & 12 & 3e-5 \\ \hline
\multirow{3}{*}{AUX-DST} & S1 & 12 & 1e-4 \\
 & S2 & 15 & 2.5e-5 \\
 & S3 & 12 & 2e-5 \\ \hline
\end{tabular}
\caption{Number of maximum training epochs and peak validation (meta) learning rate on MultiWOZ 2.4.}
\label{tab:para}
\end{table}

\begin{table}[t]
\centering
\setlength{\tabcolsep}{1.2mm}
\begin{tabular}{c|c|cc}
\hline
\textbf{Model} & \textbf{Scheme} & \textbf{Epochs} & \textbf{Learning Rate} \\ \hline
\multirow{3}{*}{SOM-DST} & S1 & 25 & 3e-5 \\
 & S2 & 25 & 1e-5 \\
 & S3 & 25 & 8e-6 \\ \hline
\end{tabular}
\caption{Number of maximum training epochs and peak validation (meta) learning rate on MultiWOZ 2.0*.}
\label{tab:para2}
\end{table}

\section{Convergence Analysis}

Considering that our proposed learning algorithm optimizes the primary model and the learnable function alternately, it is meaningful to study its convergence. To this end, we plot the loss value curves of training batch and validation (meta) batch over training steps. We adopt AUX-DST as the primary model and apply scheme S1 to learn the weighting parameters. We conduct this experiment on MultiWOZ 2.4. The results are illustrated in Figure~\ref{fig:conv}. As can be observed, the training loss and validation (meta) loss both converge to relatively small values after sufficient training steps.

\begin{figure} [t]
    \centering    \includegraphics[width=1.0\linewidth]{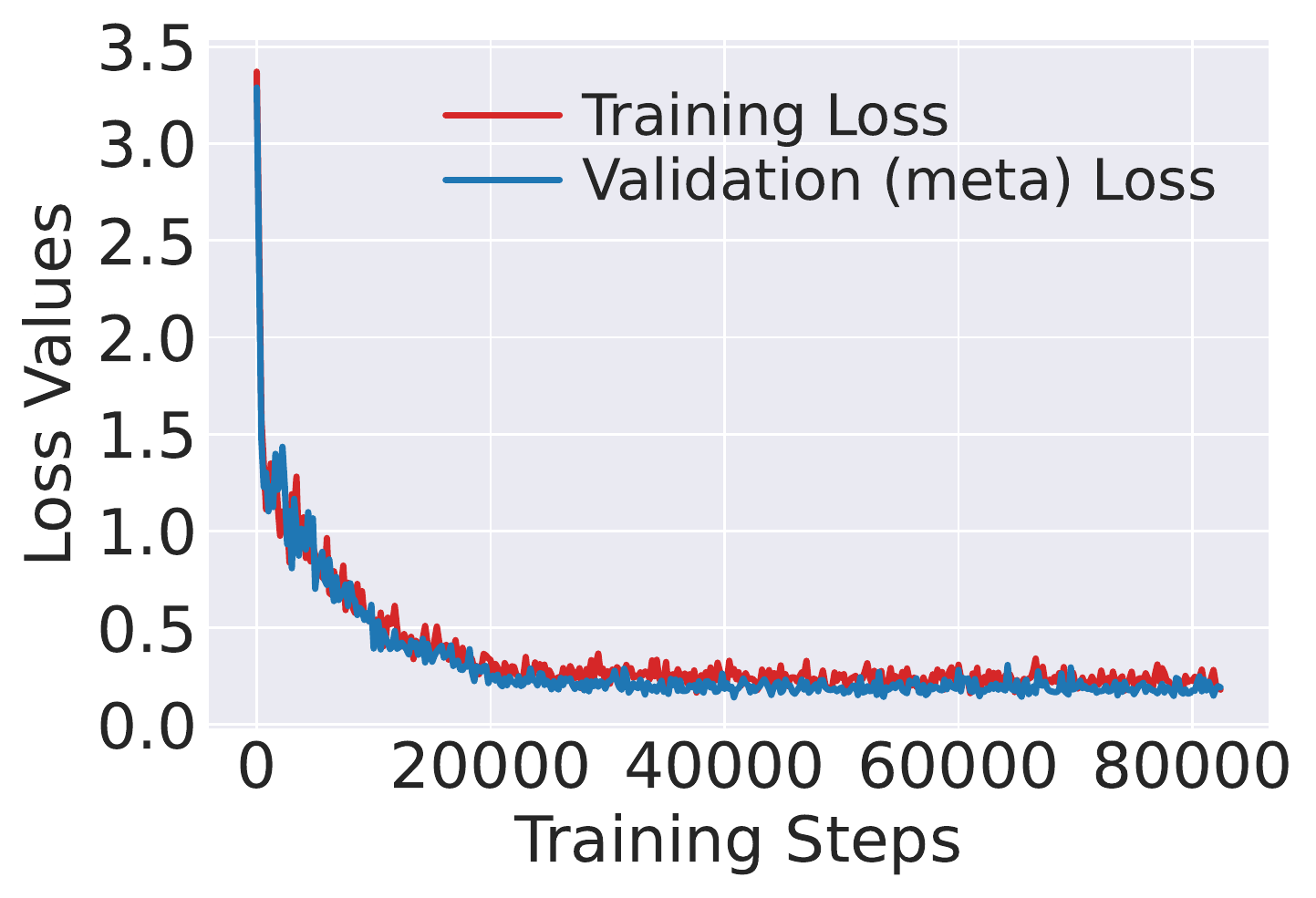}
    \caption{Training and validation (meta) loss curves over training steps.}
    \label{fig:conv} 
\end{figure}

\section{Error Analysis}

We further investigate the error rate with respect to each slot. We adopt SOM-DST as the primary model and compare scheme S2 to ASSIST with the best value of $\alpha$ ($\alpha=0.4$) and ASSIST without using the pseudo labels ($\alpha=0.0$). We conduct the experiment on MultiWOZ 2.4 as well and the results are illustrated in Figure~\ref{fig:error}. It is shown that MetaASSIST achieves lower error rates for 28 slots when compared to ASSIST ($\alpha=0.0$). MetaASSIST also outperforms ASSIST ($\alpha=0.4$) on 18 of the 30 slots. These results verify again the superiority of our proposed framework MetaASSIST. 

\begin{figure*} [t]
    \centering    \includegraphics[width=1.0\linewidth]{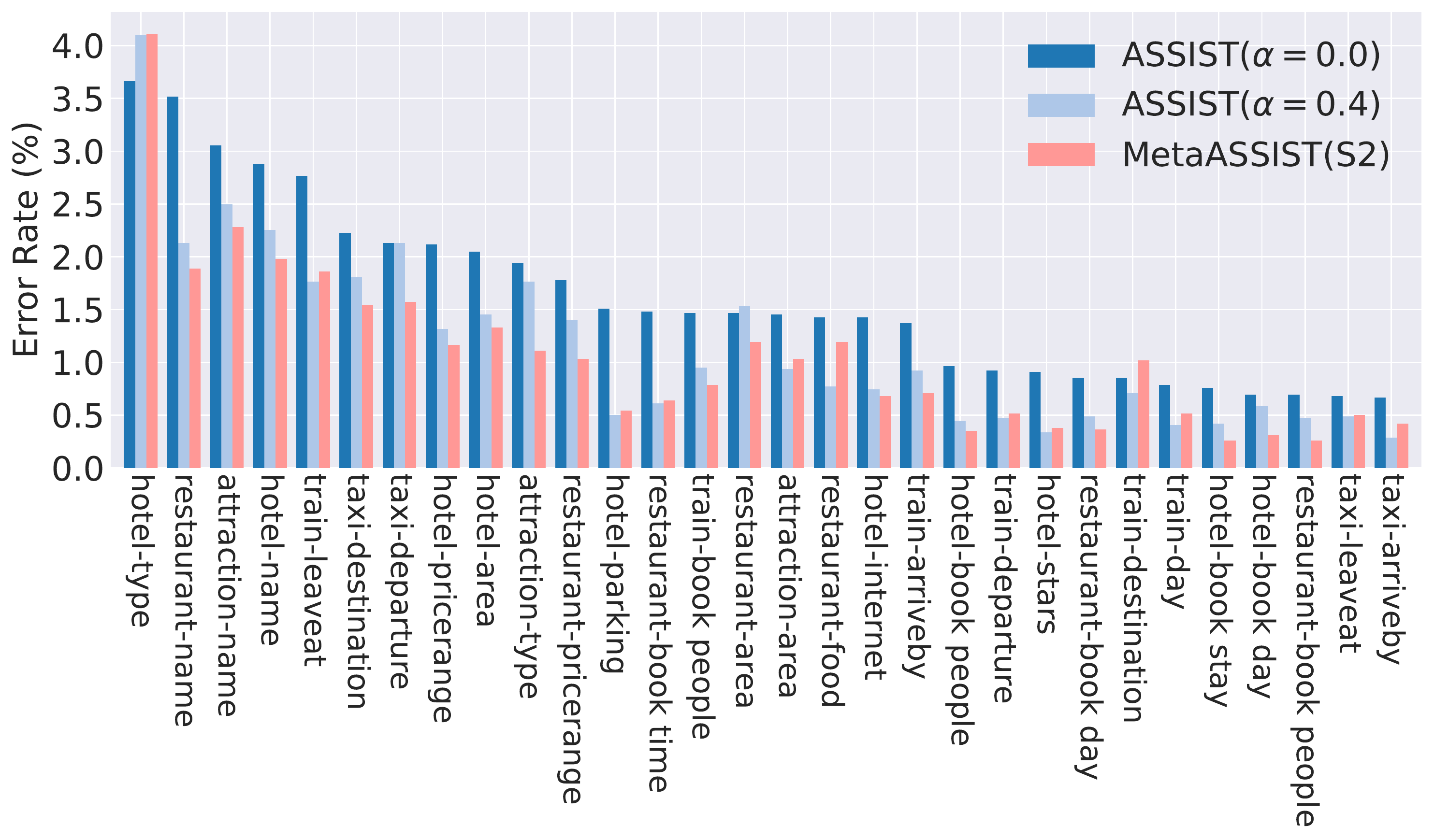}
    \caption{The error rate of each slot on MultiWOZ 2.4. SOM-DST is employed as the primary model.}
    \label{fig:error} 
\end{figure*}

\end{document}